\theoremstyle{definition}
\newtheorem{definition}{Definition}
\theoremstyle{plain}
\newtheorem{theorem}{Theorem}
\newtheorem{lemma}{Lemma}
\newtheorem*{corollary}{Corollary}
\newcommand{\A}{\mathscr{A}}
\newcommand{\E}{\mathsf{E}}
\newcommand{\prob}{\mathsf{P}}
\newcommand{\dhat}{\hat d}
\newcommand{\Bbar}{\overline{B}}
\newcommand{\bigmid}{\; \Bigl\vert \;}
\newcommand{\sleq}{\leq_{\text{\rm st}}}
\newcommand{\eps}{\varepsilon}
\renewcommand{\phi}{\varphi}
\title{On $\eps$-optimality of the pursuit learning algorithm}
\author{Ryan Martin \\ Department of Mathematics, Statistics, and Computer Science \\ University of Illinois at Chicago \\ \url{rgmartin@math.uic.edu} \\ \mbox{} \\ Omkar Tilak \\ Department of Computer and Information Sciences \\ Indiana University--Purdue University Indianapolis \\ \url{otilak@cs.iupui.edu}}
\date{\today}
\begin{document}

\maketitle

\begin{abstract}
Estimator algorithms in learning automata are useful tools for adaptive, real-time optimization in computer science and engineering applications.  This paper investigates theoretical convergence properties for a special case of estimator algorithms---the pursuit learning algorithm.  In this note, we identify and fill a gap in existing proofs of probabilistic convergence for pursuit learning.  It is tradition to take the pursuit learning tuning parameter to be fixed in practical applications, but our proof sheds light on the importance of a vanishing sequence of tuning parameters in a theoretical convergence analysis.      

\smallskip

\emph{Keywords and phrases:} Convergence in probability; indirect estimator algorithms; learning automata. 
\end{abstract}

\section{Introduction}
\label{S:intro}

A learning automaton consists of an adaptive learning agent operating in unknown random environment \citep{narendra.thathachar.1989}.  In a nutshell, a learning automaton has a choice among a finite set of actions to take, with one such action being optimal in the sense that it has the highest probability of producing a reward from the environment.  This optimal action is unknown and the automaton uses feedback from the environment to try to identify the optimal action.  Applications of learning automata include game theory, pattern recognition, computer vision, and routing in communications networks.  Recently, learning automata have been used for call routing in ATM networks \citep{atlasis.loukas.vasilakos.2000}, multiple access channel selection \citep{zhong.xu.tao.2010}, congestion avoidance in wireless networks \citep{misra.tiwari.obaidat.2009}, channel selection in radio networks \citep{tuan.tong.premkumar.2010}, modeling of students' behavior \citep{oommen.hashem.2010}, clustering and backbone formation in ad-hoc wireless networks \citep{torkestania.meybodi.2010a, torkestania.meybodi.2010b}, power system stabilizers \citep{kashki.abido.abdel.2010}, and spectrum allocation in cognitive networks \citep{lixia.gang.ming.yuxing.2010}.  The simplest type of learning automata applies a direct algorithm, such as the linear reward-inaction algorithm \citep{narendra.thathachar.1989}, which uses only the environmental feedback at iteration $t$ to update the preference ordering of the actions.  A drawback to using direct algorithms is their slow rate of convergence.  Attention recently has focused on the faster indirect estimator algorithms.  What sets indirect algorithms apart from their direct counterparts is that they use the entire history of environmental feedback, i.e., from iteration 1 to $t$, to update the action preference ordering at iteration $t$.  It is this more efficient use of the environmental feedback which leads to faster convergence.  

Here we consider a special case of indirect estimator algorithms---the \emph{pursuit learning algorithm}---and, in particular, the version presented by \citet{rs1996}.  Starting with vacuous information about the unknown reward probabilities, pursuit learning adaptively samples actions and tracks the empirical reward probabilities for each action.  As the algorithm progresses, the sampling probabilities for the set of actions are updated in a way consistent with the relative magnitudes of the empirical reward probabilities; see Section~\ref{SS:notation}.  Simulations demonstrate that the algorithm is fast to converge in a number of different estimation scenarios \citep{sastry1985, lanctotoommen1992, oommenlanctot1990, thathachar.sastry.1985}.  Theoretically, the algorithm is said to converge if the sampling probability for the action with the highest reward probability becomes close to 1 as the number of interations increases.  

In the learning automata literature, $\eps$-optimality is the gold standard for theoretical convergence.  But there seems to be two different notions of $\eps$-optimality that appears in the estimator algorithm literature.  The version that appears in the direct estimator context (e.g., the linear reward-inaction algorithms) is in some sense weaker than that which appears in the indirect algorithm context.  The latter is essentially convergence in probability of the dominant action sampling probability to 1 as the number of iterations approaches infinity.  Section~\ref{SS:conv} describes these two modes of stochastic convergence in more detail, but our focus is on the latter convergence in probability version.

The main goal of this paper is to identify and fill a gap in existing proofs of $\eps$-optimality for pursuit learning.  We believe that it is important to throw light on this gap because there are relatively recent papers proposing new algorithms that simply copy verbatim these incomplete arguments.  Specifically, in many proofs, the weak law of large numbers is incorrectly interpreted as giving a bound on the probability that the sample path stays inside a fixed neighborhood of its target \emph{forever} after some fixed iteration.  It is true that any finite-dimensional properties of the sample path can be handled via the weak law of large numbers, but the word ``forever'' implies that countably many time instances must be dealt with and, hence, more care must be taken.  A detailed explanation of the gap in existing proofs is presented in Section~\ref{SS:oldproofs}. In Section~\ref{S:analysis} we give a new proof of convergence in probability for pursuit learning with some apparently new arguments.  A further consequence of our analysis relates to the algorithm's tuning parameter.  Indeed, it standard to assume, in both theory and practice, that the algorithm's tuning parameter is a small but fixed quantity.  However, our analysis suggests that it is necessary to consider a sequence of tuning parameters that vanish at a certain rate.  


\section{Pursuit learning algorithm}
\label{S:pl}

\subsection{Notation and statement of the algorithm}
\label{SS:notation}

Suppose a learning automaton has a finite set of actions $A = \{a_1,\ldots,a_r\}$.  If the automaton plays action $a_i$, then it earns a reward with probability $d_i$; otherwise, it gets a penalty.  An estimator algorithm tracks this reward/penalty information with the goal if identifying the optimal action---the one having the largest reward probability $d$.  Pursuit learning, described below, is one such algorithm.

At iteration $t$, the automaton selects an action $\alpha(t) \in A$ with respective probabilities $\pi(t) = \{\pi_1(t),\ldots,\pi_r(t) \}$.  When this action is played, the environment produces an outcome $X(t) \in \{0,1\}$ that satisfies
\[ d_i = \E\{X(t) \mid \alpha(t) = a_i\}, \quad i=1,\ldots,r. \]
As the algorithm proceeds and the various actions are tried, the automaton acquires more and more information about the $d$'s indirectly through the $X$'s.  In other words, estimates $\dhat(t)$ of $d$ at time $t$ can be used to update the sampling probabilities $\pi(t)$ in such a way that those actions with large $\dhat(t)$ are more likely to be chosen again in the next iteration.  Algorithm~\ref{algo:pl} gives the details.  

\begin{algorithm*}[ht]
\smallskip
\begin{enumerate}
\item For $i=1,\ldots,r$, set
\[ \pi_i(0) = 1/r \quad \text{and} \quad N_i(0) = 0, \]
and initialize $\dhat_i(0)$ by playing action $a_i$ a few times and recording the proportion of rewards.  Set $t=1$.  
\item
\begin{enumerate}
\item Sample $\alpha(t)$ according to $\pi(t-1)$, and observe $X(t)$ drawn from its conditional distribution given $\alpha(t)$.
\item For $i=1,\ldots,r$, update
\[ N_i(t) = \begin{cases} N_i(t-1) + 1 & \text{if $\alpha(t) = a_i$} \\ N_i(t-1) & \text{if $\alpha(t) \neq a_i$,} \end{cases} \]
which denotes the number of times action $a_i$ has been tried up to and including iteration $t$, and
\[ \dhat_i(t) = \begin{cases} \dhat_i(t-1) + \frac{X(t) - \dhat_i(t-1)}{N_i(t)} & \text{if $\alpha(t) = a_i$} \\ \dhat_i(t-1) & \text{if $\alpha(t) \neq a_i$,} \end{cases} \]
and then compute
\[ m(t) = \arg\max \{ \dhat_1(t),\ldots,\dhat_r(t) \}. \]
\item Update
\[ \pi(t) = (1-\lambda) \pi(t-1) + \lambda \delta_{m(t)}, \]
where $\delta_j$ is an $r$-vector whose $j^{\text{th}}$ entry is 1 and the others 0.
\end{enumerate}
\item Set $t \leftarrow t+1$ and return to Step 2.
\end{enumerate}
\caption{\bf-- Pursuit Learning.}
\label{algo:pl}
\end{algorithm*}

For comparison, the direct linear reward-inaction algorithm updates $\pi(t)$ according to the following rule: If $\alpha(t) = a_i$, then 
\[ \pi_j(t) = \begin{cases}
\pi_j(t-1) + \lambda X(t) [1-\pi_j(t-1)] & \text{if $j = i$,} \\
\pi_j(t-1) - \lambda X(t) \pi_j(t-1) & \text{if $j \neq i$.} 
\end{cases} \]
It is clear that this direct linear reward-inaction algorithm does not make efficient use of the full environmental history $X(1),\ldots,X(t)$ up to and including iteration $t$.  For this reason, it suffers from slower convergence than that of the indirect pursuit learning algorithm.  In fact, \citet{thathachar.sastry.1985} demonstrate, via simulations, that an indirect algorithm requires roughly 87\% fewer iterations than a direct algorithm to achieve the same level of precision.  

One might also notice that the pursuit learning algorithm is not unlike the popular stochastic approximation methods introduced in \citet{robbinsmonro} and discussed in detail in \citet{kushner}.  But a convergence analysis of pursuit learning using the powerful ordinary differential equation techniques seems particularly challenging due to the discontinuity of the $\delta_{m(t)}$ component in the Step 2(c) update.  

The internal parameter $\lambda$ controls the size of steps that can be made in moving from $\pi(t-1)$ to $\pi(t)$.  In general, small values of $\lambda$ correspond to slower rates of convergence, and vice versa.  In our asymptotic results, we follow \citet{tmm2010} and actually take $\lambda = \lambda_t$ to change with $t$.  They argue that a changing $\lambda$ is consistent with the usual notion of convergence (see also Section~\ref{SS:conv}), and does not necessarily conflict with the practical choice of small fixed $\lambda$.  In what follows, we will assume that
\begin{equation}
\label{eq:lambda}
\lambda_t = 1-\theta^{1/t}, \quad \text{for some fixed $\theta \in (e^{-1},1)$},
\end{equation}
although all that is necessary is that $\lambda_t \asymp 1-\theta^{1/t}$ as $t \to \infty$.

\subsection{Convergence and $\eps$-optimality}
\label{SS:conv}

Convergence of an estimator algorithm like pursuit learning implies that, eventually, the automaton will always play the optimal action.  In other words, if $d_1$ is the largest among the $d$'s, then $\pi_1(t)$ gets close to 1, in some sense, as $t \to \infty$.  This convergence is typically called $\eps$-optimality, although there appears to be no widely agreed upon definition.

In the context of indirect estimator algorithms, the following is perhaps the most common definition of $\eps$-optimality.  We shall henceforth assume, without loss of generality, that action $a_1$ is the unique dominant action, i.e., $d_1$ is the largest of the $d$'s.

\begin{definition}
\label{def:eps.opt}
The pursuit learning algorithm is $\eps$-optimal if, for any $\eps,\delta > 0$, there exists $T^\star = T^\star(\eps,\delta)$ and $\lambda^\star = \lambda^\star(\eps,\delta)$ such that
\begin{equation}
\label{eq:eps.opt}
\prob\{ \pi_1(t) > 1-\eps\} > 1-\delta,
\end{equation}
for all $t > T^\star$ and $\lambda < \lambda^\star$.  Simply put, the algorithm has the $\eps$-optimality property if $\pi_1(t) \to 1$ in probability as $(t,\lambda) \to (\infty, 0)$.
\end{definition}

This is the definition of $\eps$-optimality that appears in \citet{agache.oommen.2002} and the references mentioned in Section~\ref{S:intro}; \citet{thathachar.sastry.1985} say an algorithm that satisfies Definition~\ref{def:eps.opt} is \emph{optimal in probability}, arguably a better adjective.  However, a different notion of $\eps$-optimality can be found in other contexts.  This one says that the algorithm is $\eps$-optimal if, for any $\eps > 0$, there exists a fixed $\lambda > 0$ so that $\lim\inf_{t \to \infty} \pi_1(t) > 1-\eps$ with probability~1.  Compared to Definition~\ref{def:eps.opt}, this latter definition is, on one hand, stronger because the condition is ``with probability~1'' but, on the other hand, weaker because it does not even require $\pi_1(t)$ to converge.  Since one will not, in general, imply the other, it is unclear which definition is to be preferred.  \citet{oommenlanctot1990} and others have recognized the difference between the two, but apparently no explanation has been given for choosing one over the other.  

Since both $T^\star$ and $\lambda^\star$ in Definition~\ref{def:eps.opt} are linked together through the choice of $(\eps,\delta)$, it is intuitively clear that $\lambda$ should decrease with $t$.  In fact, allowing $\lambda$ to change with $t$ appears to be necessary in the proof presented in Section~\ref{SS:main}.  So, throughout this paper, our notion of $\eps$-optimality will be that \eqref{eq:eps.opt} holds for all $t > T^\star$ with the particular (vanishing) sequence of tuning parameters $\{\lambda_t\}$ in \eqref{eq:lambda}.

\subsection{Existing proofs of $\eps$-optimality}
\label{SS:oldproofs}

Here we shall identify the gap in existing proofs of $\eps$-optimality for pursuit learning.  Focus will fall primarily on the proof in \citet{rs1996}, but this is just for concreteness and not to single out these particular authors.  In fact, essentially the same gap appears in \citet{papa2004}; there is a similar mis-step in other papers which we mention briefly below.  The outline of these proofs goes roughly as follows:

\begin{description}
\item[\sc Step~1.] Show that $N_i(t) \to \infty$ in probability for each $i=1,\ldots,r$ as $t \to \infty$.  That is, show that for any large $n$ and small $\delta$, there exists $T^\star$ such that
\begin{equation}
\label{eq:step1}
\prob\Bigl\{\min_{i=1,\ldots,r} N_i(t) > n \Bigr\} > 1-\delta, \quad \forall \; t > T^\star.
\end{equation}

\vspace{-1mm}

\item[\sc Step~2.] Show that for any small $\delta$ and $\rho$, there exists $n$ such that
\begin{equation}
\label{eq:step2}
\prob\Bigl\{ \max_{i=1,\ldots,r} |\dhat_i(t)-d_i| < \rho \bigmid \min_{i=1,\ldots,r} N_i(t) > n \Bigr\} > 1-\delta.
\end{equation}
\citet{rs1996} apply the famous inequality of \citet{hoeffding1963} to get an expression on the right-hand side that approaches 1 exponentially fast in $n$.  A similar idea is used in Section~\ref{SS:main}.

\vspace{-1mm}

\item[\sc Step~3.] Reason from \eqref{eq:step2} that, for sufficiently small $\rho$ and \emph{for all $t$ larger than some $T^\star$}, $\dhat_1(t)$ will be the largest among the $\dhat_i(t)$'s with probability at least $1-\delta$.

\vspace{-1mm}

\item[\sc Step~4.] Apply the monotonicity property \citep{lanctotoommen1992} to show that $\pi_1(t)$ increases monotonically to 1 starting from some $t > T^\star$ and must, therefore, eventually cross the $1-\eps$ threshold.
\end{description}

The trouble with this line of reasoning emerges in Step~3, and is a consequence of an incorrect interpretation of the law of large numbers.  Roughly speaking, what is needed in Step~3 is a control of the entire $\dhat(t)$ process over an infinite time horizon, $t > T^\star$, but the law of large numbers alone can provide control at only finitely many time instances.  More precisely, from Steps~1 and 2 and the law of large numbers one can reason that
\begin{equation}
\label{eq:fixed.time}
\prob\bigl\{ \text{$\dhat_1(t)$ is the largest of the $\dhat(t)$'s} \bigr\} > 1-\delta, \quad \forall \; t \geq T^\star.
\end{equation}
But even though the left-hand side above is monotone increasing in $t$, one cannot conclude directly from this fact that
\begin{equation}
\label{eq:infinite.time}
\prob\bigl\{ \text{$\dhat_1(t)$ is the largest of the $\dhat(t)$'s \emph{for all} $t \geq T^\star$} \bigr\} > 1-\delta.
\end{equation}
\citet{rs1996} implicitly assume that \eqref{eq:fixed.time} implies \eqref{eq:infinite.time} in their proof of $\eps$-optimality.  A slightly different oversight is made in \citet{thatacharsastry1987}, \citet{oommenlanctot1990}, and \citet{lanctotoommen1992}.  They assume that
\[ \prob\bigl\{\pi_1(t) > 1-\eps \mid \text{$\dhat_1(t)$ is the largest of the $\dhat(t)$'s} \bigr\} \]
can be made arbitrarily close to 1 for large enough $t$.  However, the knowledge that $\dhat_1(t)$ is the largest of the $\dhat(t)$'s only at time $t$ provides no control over how close $\pi_1(t)$ is to 1.  The monotonicity property in Step~4 requires that the $\dhat(t)$'s be properly ordered \emph{forever}, not just at a single point in time.

It will be insightful to have a clearer picture of what the problem is mathematically.  First, the left-hand side of \eqref{eq:infinite.time} is, in general, much smaller than the left-hand side of \eqref{eq:fixed.time}, so the claim ``\eqref{eq:fixed.time} $\Rightarrow$ \eqref{eq:infinite.time}'' immediately seems questionable.  In fact, if $E_t$ is the event that $\dhat_1(t)$ is the largest of the $\dhat(t)$'s at time $t$, then from \eqref{eq:fixed.time} we can conclude that 
\begin{equation}
\label{eq:one.way}
\liminf_{t \to \infty} \prob\{E_t\} \geq 1-\delta. 
\end{equation}
But the event inside $\prob\{\cdots\}$ in \eqref{eq:infinite.time} is
\[ \bigcap_{t \geq T^\star} E_t \subset \bigcup_{T^\star \geq 1} \bigcap_{t \geq T^\star} E_t =: \liminf_{t \to \infty} E_t, \]
and it follows from Fatou's lemma that
\begin{equation}
\label{eq:other.way}
\text{left-hand side of \eqref{eq:infinite.time}} \leq \prob\bigl\{ \liminf_{t \to \infty} E_t \bigr\} \leq \liminf_{t \to \infty} \prob\{E_t\}.
\end{equation}
So, from \eqref{eq:one.way} and \eqref{eq:other.way}, we can conclude only that the left-hand side \eqref{eq:infinite.time} is bounded from above by something greater than $1-\delta$ and, hence, \eqref{eq:fixed.time} need not imply \eqref{eq:infinite.time}.  Therefore, some pursuit learning-specific considerations are needed and, to the authors' knowledge, there is no obvious way to fill this gap.  In the next section we give a proof of $\eps$-optimality based on some apparently new arguments.

\section{A refined analysis of pursuit learning}
\label{S:analysis}

\subsection{An infinite-series result}
\label{SS:series}

Here we state an infinite-series result which will be useful in our analysis in Section~\ref{SS:main}.  For completeness, a proof is given in Appendix~\ref{S:proof}.

\begin{lemma}
\label{lem:sum}
Given $a,b \in (0,1)$, let $\zeta(t) = (1-a/t^b)^t$, $t \geq 1$.  Then $\sum_{t=1}^\infty \zeta(t) < \infty$.  
\end{lemma}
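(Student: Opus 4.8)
The plan is to dominate $\zeta(t)$ by a sequence whose summability is obvious, and then invoke comparison. First I would record that the expression makes sense and is well behaved: since $a,b \in (0,1)$, we have $0 < a/t^b \leq a < 1$ for every $t \geq 1$, so the base $1-a/t^b$ lies in $(0,1)$ and hence $\zeta(t) \in (0,1)$. Taking logarithms and applying the elementary bound $\log(1-x) \leq -x$, valid for all $x \in [0,1)$, gives
\[
\log \zeta(t) = t \log\!\bigl(1 - a t^{-b}\bigr) \leq -a\, t^{1-b},
\]
so that $\zeta(t) \leq \exp(-a\, t^{1-b})$ for all $t \geq 1$.

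The next step is to note that, because $1-b > 0$, the function $t \mapsto t^{1-b}$ grows faster than $\log t$; indeed $t^{1-b}/\log t \to \infty$ as $t \to \infty$ (e.g.\ since $t^{1-b} = e^{(1-b)\log t}$ eventually dominates any linear function of $\log t$). Hence there exists a threshold $T = T(a,b)$ with $a\, t^{1-b} \geq 2\log t$ for all $t \geq T$, which yields $\zeta(t) \leq e^{-2\log t} = t^{-2}$ for every $t \geq T$. Splitting the series as $\sum_{t=1}^{T-1}\zeta(t) + \sum_{t=T}^{\infty}\zeta(t)$, the first sum is a finite sum of finite terms and the second is bounded by the convergent $p$-series $\sum_{t=T}^{\infty} t^{-2}$, so $\sum_{t=1}^{\infty}\zeta(t) < \infty$.

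I do not expect any genuine obstacle here: the only point requiring a word of justification is the existence of the threshold $T$, and that is immediate from $\lim_{t\to\infty} t^{1-b}/\log t = \infty$. If one preferred a route that avoids even this, one could instead compare the series to the integral $\int_1^\infty e^{-a x^{1-b}}\,dx$ via the substitution $u = x^{1-b}$, which transforms it into a constant multiple of $\int u^{b/(1-b)} e^{-au}\,du$, a convergent integral of Gamma type; but the crude bound $\zeta(t) \leq t^{-2}$ for large $t$ is the shortest path and is all that Section~\ref{SS:main} will need.
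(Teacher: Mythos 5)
Your proof is correct, and it pivots on exactly the same key estimate as the paper's, namely $\zeta(t) \leq e^{-a t^{1-b}}$; the differences are only in how that estimate is obtained and how its summability is then certified. You get the estimate in one line from $\log(1-x) \leq -x$, whereas the paper reaches it by showing $f(t) = (1-a/t)^t$ is increasing to $e^{-a}$ (a short calculus computation) and writing $\zeta(t) = f(t^b)^{t^{1-b}}$; your route is cleaner and loses nothing. For the tail, the paper runs the integral test on $\int_1^\infty e^{-at^{1-b}}\,dt$ and evaluates it as an incomplete gamma function, which is slightly heavier machinery but pays off later: Appendix~\ref{S:iterations} reuses precisely that gamma-function expression to extract a numerical bound on $T^\star$. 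Your comparison $\zeta(t) \leq t^{-2}$ for $t \geq T(a,b)$ is the shortest path to convergence alone, but it discards the quantitative tail bound that the paper wants downstream; your own closing remark about the substitution $u = x^{1-b}$ is in fact the paper's argument, so if one cares about explicit iteration counts that is the version to keep.
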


It is easy to see that the condition $b \in (0,1)$ is necessary.  Indeed, if $b > 1$, then the sequence itself converges to $e^{-a}$ and the series cannot hope to converge.  In our pursuit learning application below, this condition will be taken care of in our choice of tuning parameter sequence $\lambda_t$.  

\subsection{Main results}
\label{SS:main}

We start by summarizing a few known results from the literature \citep[see, e.g.,][]{tmm2010} which will be needed in the proof of the main theorem.  Recall the notation $N_i(t)$ used for the number of times, up to iteration $t$, that action $i$ has been tried, $i=1,\ldots,r$.  The first result is that all of the $N(t)$'s are unbounded in probability as the number of iterations $t$ increases to $\infty$.

\begin{lemma}
\label{lem:counts}
Suppose $\lambda_t$ satisfies \eqref{eq:lambda} with $e^{-1} < \theta < 1$.  Then for any small $\delta > 0$ and any $K > 0$, there exists $T_1^\star$ such that, for each $i=1,\ldots,r$,
\[ \prob\{ N_i(t) \leq K \} < \delta, \quad \forall \; t > T_1^\star. \]
\end{lemma}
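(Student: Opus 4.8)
The plan is to bound the sampling probability $\pi_i(s-1)$ from below by a \emph{non-random} sequence that is nevertheless not summable, and then to dominate $N_i(t)$ from below by a partial sum of independent Bernoulli trials, to which the second Borel--Cantelli lemma applies.

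First I would extract the one-step lower bound implicit in Step~2(c) of Algorithm~\ref{algo:pl}. Since $\pi_i(t) = (1-\lambda_t)\pi_i(t-1) + \lambda_t \mathbf{1}\{m(t)=i\} \ge (1-\lambda_t)\pi_i(t-1)$ and $\pi_i(0)=1/r$, iterating gives
\[
\pi_i(t) \ge \frac1r\prod_{s=1}^t(1-\lambda_s) = \frac1r\,\theta^{\sum_{s=1}^t 1/s} = \frac1r\,\theta^{H_t},
\]
where $H_t = \sum_{k=1}^t 1/k$ is the $t$-th harmonic number. Using $H_t \le 1+\log t$ and the hypothesis $\theta \in (e^{-1},1)$ (equivalently $\log\theta \in (-1,0)$), this becomes $\pi_i(t) \ge q_t := (\theta/r)\, t^{\log\theta}$ for all $t\ge 1$, together with the trivial $\pi_i(0)=1/r=:q_0$. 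The sequence $\{q_t\}$ is deterministic, lies in $(0,1)$, and satisfies $\sum_{t\ge 0} q_t = \infty$ precisely because $-\log\theta < 1$. This is the only place where $\theta > e^{-1}$ is used, and it is exactly the feature that makes the conclusion possible.

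Next I would set up a coupling. Realise the algorithm on a probability space carrying independent $\mathrm{Uniform}(0,1)$ variables $U_1,U_2,\dots$, drawing $\alpha(s)$ at iteration $s$ by assigning action $a_i$ the subinterval $[0,\pi_i(s-1))$ of $[0,1]$ and the remaining actions the complement, so that $\{\alpha(s)=a_i\} = \{U_s < \pi_i(s-1)\}$; the environmental outcomes $X(s)$ may use further independent randomness and are irrelevant to $N_i$. Because $\pi_i(s-1) \ge q_{s-1}$ for \emph{every} realisation of the past, we get $\mathbf{1}\{\alpha(s)=a_i\} \ge Y_s := \mathbf{1}\{U_s < q_{s-1}\}$ pointwise, and the $Y_s$ are independent with $\prob\{Y_s=1\} = q_{s-1}$. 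Hence $N_i(t) = \sum_{s=1}^t \mathbf{1}\{\alpha(s)=a_i\} \ge S_t := \sum_{s=1}^t Y_s$ almost surely, so in particular $\prob\{N_i(t) \le K\} \le \prob\{S_t \le K\}$.

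Finally, since $\sum_s q_{s-1} = \infty$ and the $Y_s$ are independent, the second Borel--Cantelli lemma gives $Y_s = 1$ infinitely often with probability one, so the nondecreasing sequence $S_t$ tends to $+\infty$ almost surely; consequently the nested events $\{S_t \le K\}$ shrink to a null set and $\prob\{S_t \le K\} \to 0$ as $t\to\infty$. Fixing $i$ and choosing $T_1^\star(i)$ so that $\prob\{S_t\le K\}<\delta$ for all $t>T_1^\star(i)$, then taking $T_1^\star = \max_{1\le i\le r} T_1^\star(i)$, yields the claim. The one genuinely delicate step is the first: producing a lower bound on $\pi_i(s-1)$ that is at once deterministic---so the coupling with independent Bernoulli trials is legitimate---and non-summable---so Borel--Cantelli applies; everything afterwards is routine. (If a quantitative $T_1^\star$ were desired one could instead invoke Chebyshev's inequality with $\mathrm{Var}(S_t)\le \E S_t \to \infty$, but the lemma as stated needs only the qualitative fact.)
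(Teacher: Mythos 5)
Your proof is correct. Note that the paper itself does not prove Lemma~\ref{lem:counts} at all---it is quoted as a known result with a citation to the literature---so there is no in-paper proof to match against; but the one nontrivial ingredient you use, the sure (not merely almost-sure) lower bound $\pi_i(s) \ge \pi_i(0)\,\theta^{H_s}$ obtained by iterating $\pi_i(s) \ge (1-\lambda_s)\pi_i(s-1)$, is precisely the bound $\omega_t \le \min\{\pi_1(1),\ldots,\pi_1(t)\}$ that the paper invokes in Lemma~\ref{lem:mgf} to dominate the moment generating function of $N_1(t)$ by that of a ${\sf Bin}(t,\omega_t)$ variable. Your coupling $\mathbf{1}\{\alpha(s)=a_i\} \ge \mathbf{1}\{U_s < q_{s-1}\}$ is the pathwise (stochastic-dominance) version of that same comparison, and your use of the second Borel--Cantelli lemma, or equivalently Chebyshev with $\mathrm{Var}(S_t)\le \E S_t = \sum_{s\le t} q_{s-1} \to \infty$, correctly exploits the non-summability of $t^{\log\theta}$, which is exactly where the hypothesis $\theta > e^{-1}$ enters---consistent with where the paper needs $b=-\ln\theta \in (0,1)$ in Lemma~\ref{lem:sum}. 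The only cosmetic caveat is that the coupling that places action $a_i$'s subinterval at the left end of $[0,1]$ must be constructed separately for each $i$; since the law of $N_i(t)$ is coupling-independent this is harmless, and your final maximum over $i$ handles the uniformity in $i$. If you pushed the MGF route instead of Borel--Cantelli you would additionally recover an explicit, computable $T_1^\star$, in the spirit of the paper's Appendix~\ref{S:iterations}.
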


As the number of times each action is played is increasing to $\infty$, it is reasonable to think that the estimates, namely the $\dhat(t)$'s, should be approaching their respective targets, the $d$'s.  It turns out that this intuition is indeed correct.

\begin{lemma}
\label{lem:nbhd}
Suppose $\lambda_t$ satisfies \eqref{eq:lambda} with $e^{-1} < \theta < 1$.  Then for any small $\delta > 0$ and any small $\eta > 0$, there exists $T_2^\star$ such that, for each $i=1,\ldots,r$,
\[ \prob\bigl\{ |\dhat_i(t) - d_i| > \eta \bigr\} < \delta, \quad \forall \; t > T_2^\star. \]
\end{lemma}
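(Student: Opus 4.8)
The plan is to decompose the event $\{|\dhat_i(t) - d_i| > \eta\}$ according to whether the random count $N_i(t)$ exceeds a threshold $K$: on $\{N_i(t) \le K\}$ we defer to Lemma~\ref{lem:counts}, and on $\{N_i(t) > K\}$ we use a concentration bound for $\dhat_i(t)$. In effect this is the single-time-$t$ analogue of Steps~1 and~2 of the outline in Section~\ref{SS:oldproofs}, carried out so as to yield the honest convergence-in-probability conclusion rather than a spurious ``forever'' statement.

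The one delicate point --- and it is exactly the kind of subtlety this paper is about --- is that $N_i(t)$ is \emph{not} independent of the environmental outcomes feeding into $\dhat_i(t)$ (a run of bad early outcomes for $a_i$ makes $a_i$ less likely to be resampled), so one must not simply condition on $\{N_i(t) = n\}$ and treat $\dhat_i(t)$ as an average of $n$ genuinely i.i.d.\ draws. To sidestep this I would use the ``tape'' representation: for each $i$ let $Y_{i,1}, Y_{i,2}, \ldots$ be the outcomes the environment returns on successive plays of $a_i$. Since the conditional law of $X(t)$ given $\alpha(t) = a_i$ is $\mathrm{Bernoulli}(d_i)$ irrespective of the past, $\{Y_{i,k}\}_{k \ge 1}$ is i.i.d.\ $\mathrm{Bernoulli}(d_i)$, and the running-mean update in Algorithm~\ref{algo:pl} gives $\dhat_i(t) = N_i(t)^{-1}\sum_{k=1}^{N_i(t)} Y_{i,k}$ on $\{N_i(t) \ge 1\}$ (the initial value $\dhat_i(0)$ is overwritten the first time $a_i$ is played in the main loop). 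Then, with no independence claim whatsoever,
\[
\{|\dhat_i(t) - d_i| > \eta\} \cap \{N_i(t) > K\} \;\subseteq\; \bigcup_{n > K} \Bigl\{ \bigl| \tfrac{1}{n}\textstyle\sum_{k=1}^n Y_{i,k} - d_i \bigr| > \eta \Bigr\},
\]
so a union bound followed by Hoeffding's inequality \citep{hoeffding1963} gives the uniform-in-$t$ estimate
\[
\prob\bigl( \{|\dhat_i(t) - d_i| > \eta\} \cap \{N_i(t) > K\} \bigr) \;\le\; \sum_{n > K} 2 e^{-2n\eta^2} \;=\; \frac{2 e^{-2(K+1)\eta^2}}{1 - e^{-2\eta^2}},
\]
which, for fixed $\eta > 0$, can be made smaller than $\delta/2$ by taking $K = K(\eta,\delta)$ large.

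With such a $K$ fixed, I would invoke Lemma~\ref{lem:counts} with constants $\delta/2$ and $K$ to obtain $T_1^\star$ with $\prob\{N_i(t) \le K\} < \delta/2$ for all $t > T_1^\star$, and then add the two bounds to get $\prob\{|\dhat_i(t) - d_i| > \eta\} < \delta$ for $t > T_2^\star := T_1^\star$; finiteness of $r$ lets one take a common threshold for all $i$ if desired. I do not anticipate a genuine obstacle here: the heavy lifting (and the only place the vanishing schedule \eqref{eq:lambda} and Lemma~\ref{lem:sum} enter) sits inside Lemma~\ref{lem:counts}, while the sole thing that must be handled with care is the point flagged above --- union-bounding over the admissible values of $N_i(t)$ rather than conditioning on a data-dependent sample size.
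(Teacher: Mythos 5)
Your proof is correct. Note that the paper does not actually prove Lemma~\ref{lem:nbhd}: it is stated as a known result imported from the literature (the pointer is to \citet{tmm2010}), so there is no in-paper argument to match line by line. The closest analogue in the paper is the bound on $\prob\{B(t)^c\}$ in the proof of Theorem~\ref{thm:eps.opt}, which decomposes over the values of $N_1(t)$ and asserts that $\prob\{B(t)^c \mid N(t)=n\} \leq e^{-hn}$ follows ``easily from Hoeffding's inequality.'' Your route differs precisely at the spot you flag: conditioning on the data-dependent sample size $\{N_i(t)=n\}$ is delicate, because that event is correlated with the recorded outcomes through the sampling probabilities, so the $n$ conditioned outcomes are not literally i.i.d.\ $\mathrm{Bernoulli}(d_i)$; your union bound over all deterministic sample sizes $n>K$ on the i.i.d.\ tape sidesteps this entirely, at the harmless cost of the geometric factor $(1-e^{-2\eta^2})^{-1}$. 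The remaining ingredients are all in order: the tape representation is valid (and you correctly observe that $\dhat_i(0)$ is overwritten at the first main-loop play of $a_i$, since $N_i$ restarts at $0$), the event $\{N_i(t)=0\}$ is absorbed into $\{N_i(t)\leq K\}$ and handled by Lemma~\ref{lem:counts}, the Hoeffding tail is uniform in $t$ so $T_2^\star = T_1^\star$ suffices, and finiteness of $r$ gives a common threshold. Indeed, your estimate could be reused to make the corresponding conditioning step in the paper's main proof airtight.
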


An alternative way to phrase the previous two lemmas is that, under the stated conditions, $N_i(t)$ and $\dhat_i(t)$ converge in probability to $\infty$ and $d_i$, respectively, as $t \to \infty$.  Next is the main $\eps$-optimality result.  

\begin{theorem}
\label{thm:eps.opt}
Suppose $\lambda_t$ satisfies \eqref{eq:lambda} with $e^{-1} < \theta < 1$.  Then for any small $\eps,\delta > 0$, there exists $T^\star$ such that
\[ \prob\{ \pi_1(t) > 1-\eps \} > 1-\delta, \quad \forall \; t > T^\star. \]
\end{theorem}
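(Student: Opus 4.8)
The plan is to show that the update rule $\pi(t) = (1-\lambda_t)\pi(t-1) + \lambda_t \delta_{m(t)}$ drives $\pi_1(t)$ to $1$ once the estimated-reward ordering stabilizes with $m(t)=1$, and the key new ingredient is to control the probability that the ordering is \emph{correct for all sufficiently large $t$} rather than at a single time. So I would first fix $\eps,\delta>0$ and isolate the ``good'' event $G_T = \bigcap_{t\geq T} \{m(t)=1\}$, meaning $\dhat_1(t)$ is strictly the largest of the $\dhat(t)$'s for every $t\geq T$. On $G_T$, unrolling the recursion gives, for $s\geq T$,
\[ 1-\pi_1(s) = (1-\pi_1(T))\prod_{t=T+1}^{s}(1-\lambda_t), \]
since each step with $m(t)=1$ contracts $1-\pi_1$ by the factor $1-\lambda_t = \theta^{1/t}$. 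Because $\sum_t \lambda_t = \sum_t (1-\theta^{1/t}) = \infty$ (as $1-\theta^{1/t}\sim (-\log\theta)/t$), the product $\prod_{t>T}(1-\lambda_t)\to 0$, so there is a deterministic $s_0(T,\eps)$ with $1-\pi_1(s)<\eps$ for all $s\geq s_0$ on the event $G_T$. Hence it suffices to make $\prob(G_T)$ large for some fixed $T$, then take $T^\star = \max\{T, s_0(T,\eps)\}$.

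The heart of the matter is therefore bounding $\prob(G_T^c) = \prob\bigl(\bigcup_{t\geq T}\{m(t)\neq 1\}\bigr) \leq \sum_{t\geq T}\prob(m(t)\neq 1)$. Set $\eta = \tfrac12\min_{i\neq 1}(d_1-d_i)$; if $m(t)\neq 1$ then some $|\dhat_i(t)-d_i|>\eta$, so $\prob(m(t)\neq 1)\leq \sum_{i=1}^r \prob(|\dhat_i(t)-d_i|>\eta)$. I would bound each such term by conditioning on $N_i(t)$: on $\{N_i(t)=k\}$, $\dhat_i(t)$ is an average of $k$ i.i.d.\ Bernoulli$(d_i)$ outcomes, so Hoeffding's inequality gives $\prob(|\dhat_i(t)-d_i|>\eta \mid N_i(t)=k)\leq 2e^{-2\eta^2 k}$. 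Splitting on whether $N_i(t)$ is small or large,
\[ \prob(|\dhat_i(t)-d_i|>\eta) \leq \prob(N_i(t)\leq K) + 2e^{-2\eta^2 K}. \]
The second term is a fixed constant I can make small by choosing $K$ large; the first term must be shown to be \emph{summable} in $t$, which is exactly where Lemma~\ref{lem:sum} enters. The chain is: with probability close to $1$ the automaton's sampling probabilities are nondegenerate, and each action gets played with probability bounded below, so $N_i(t)$ stochastically dominates a Binomial-type count; a careful version of the argument behind Lemma~\ref{lem:counts} should yield a bound of the form $\prob(N_i(t)\leq K)\leq C(1-a/t^b)^{t}$ for suitable $a\in(0,1)$, $b\in(0,1)$ (the exponent $b<1$ coming from the $\lambda_t\asymp 1/t$ decay, which is precisely why a vanishing tuning sequence is needed), and Lemma~\ref{lem:sum} makes $\sum_t (1-a/t^b)^t<\infty$. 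Then $\sum_{t\geq T}\prob(N_i(t)\leq K)\to 0$ as $T\to\infty$, so the whole tail $\sum_{t\geq T}\prob(m(t)\neq 1)$ can be driven below $\delta$ by taking $T$ large.

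Putting the pieces in order: (i) fix $\eps,\delta$; (ii) choose $\eta$ from the reward gaps and $K$ large enough that $2r\,e^{-2\eta^2 K}$ is negligible; (iii) use Lemma~\ref{lem:sum} together with a quantitative count bound to choose $T$ with $\sum_{t\geq T}\sum_i[\prob(N_i(t)\leq K)+2e^{-2\eta^2 K}]<\delta$, so $\prob(G_T)>1-\delta$; (iv) on $G_T$, use the deterministic contraction identity for $1-\pi_1(s)$ and the divergence of $\sum\lambda_t$ to find $s_0$ with $1-\pi_1(s)<\eps$ for all $s\geq s_0$; (v) set $T^\star=\max\{T,s_0\}$ and conclude $\prob(\pi_1(t)>1-\eps)\geq\prob(G_T)>1-\delta$ for all $t>T^\star$. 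The main obstacle is step (iii): establishing the summability of $\prob(N_i(t)\leq K)$ uniformly in the relevant sense. A single application of the weak law (as in the flawed proofs discussed in Section~\ref{SS:oldproofs}) only shows each term $\to 0$, not that the series converges; the rate $b<1$ must be extracted honestly from the dynamics of $\pi(t)$ under $\lambda_t\asymp 1/t$, and this is the step where the choice \eqref{eq:lambda} is doing essential work rather than being a convenience. I would also need to handle the mild dependence between $\{m(t)=1\}$ across $t$ and the event driving the count lower bound, but a stochastic-domination argument (comparing $N_i(t)$ to a sum of independent indicators with a uniform lower bound on their success probabilities, valid on a high-probability event where no $\pi_i$ has collapsed) should make the union bound over $t$ legitimate.
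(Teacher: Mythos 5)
Your overall architecture matches the paper's: reduce the theorem to showing that the estimate ordering is correct \emph{for all} $t\geq T$ with probability $>1-\delta$, attack that event with the union bound $\prob(G_T^c)\leq\sum_{t\geq T}\prob(m(t)\neq 1)$, and then use the deterministic contraction of $1-\pi_1$ under $\lambda_t=1-\theta^{1/t}$ (your unrolled product is the paper's identity \eqref{eq:pi.monotone}, and $\sum_t\lambda_t=\infty$ is the right reason it vanishes). You also correctly locate the crux in making the tail series small.

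However, your step (iii) as written fails. The decomposition
\[
\prob\bigl(|\dhat_i(t)-d_i|>\eta\bigr)\ \leq\ \prob\bigl(N_i(t)\leq K\bigr) + 2e^{-2\eta^2 K}
\]
with a \emph{fixed} $K$ yields a bound whose second term is a positive constant independent of $t$; summing it over $t\geq T$ gives $+\infty$ for every $T$, so the requirement $\sum_{t\geq T}\sum_i\bigl[\prob(N_i(t)\leq K)+2e^{-2\eta^2K}\bigr]<\delta$ can never be met. For the union bound to close, the per-iteration bound on $\prob(m(t)\neq 1)$ must itself be summable in $t$, which forces the truncation level to grow, $K=K_t\to\infty$, and then you need a quantitative lower-tail bound on $N_i(t)$ at that moving level. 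This is precisely what the paper's Lemma~\ref{lem:mgf} supplies: it does not truncate at all, but keeps the full sum $\sum_{n\leq t}e^{-hn}\prob\{N(t)=n\}=\psi_t(-h)$, i.e.\ the moment generating function of $N(t)$ at $-h$, bounds it by the binomial MGF $\phi_t(-h)=[1-\omega_t(1-e^{-h})]^t$ using the \emph{deterministic} lower bound $\pi_1(s)\geq\omega_t=\pi_1(0)\theta^{\gamma(t)}$ from \citet{rs1996} (so no auxiliary high-probability ``no collapse'' event and no extra conditioning of the kind you flag at the end are needed), and only then invokes Lemma~\ref{lem:sum} with $a=\pi_1(0)(1-e^{-h})$ and $b=-\ln\theta$. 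Your conjectured bound $\prob(N_i(t)\leq K)\leq C(1-a/t^b)^t$ is essentially a Chernoff consequence of that same MGF estimate, so the missing ingredient is recoverable---but it is the paper's key lemma rather than a routine sharpening of Lemma~\ref{lem:counts}, and until it is supplied the fixed-$K$ split leaves your proof with a divergent series.
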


To prove Theorem~\ref{thm:eps.opt}, we shall initially follow the argument of \citet{rs1996}.  To simplify notation, define the events
\[ A_\eps(t) = \{\pi_1(t) > 1-\eps\}, \quad t \geq 1. \]
Next, we observe that, since the reward probabilities are fixed, there is a number $\eta > 0$ such that, if $|\dhat_1(t)-d_1| < \eta$, then $\dhat_1(t)$ must be the largest of the estimates $\dhat(t)$ at iteration $t$.  For this $\eta$, define the two sequences of events
\begin{align*}
B(t) & = \bigl\{ |\dhat_1(t)-d_1| < \eta \bigr\}, \quad t > 0, \\
\Bbar(T) & = \bigl\{\sup_{t \geq T} |\dhat_1(t)-d_1| < \eta \bigr\} = \bigcap_{t \geq T} B(t), \quad T > 0.
\end{align*}
Then, for any positive integers $t$ and $T$, the law of total probability gives
\[ \prob\{A_\eps(t+T)\} \geq \prob\{ A_\eps(t+T) \mid \Bbar(T) \} \prob\{ \Bbar(T) \}. \]
Moreover, from the monotonicity property of pursuit learning, it follows that there exists $T_3^\star$ such that
\[ \prob\{ A_\eps(t+T) \mid \Bbar(T) \} = 1, \quad \forall \; t > T_3^\star, \]
Therefore, to complete the proof, it remains to show that there exists $T > 0$ such that $\prob\{\Bbar(T)\} > 1-\delta$.  But by DeMorgan's law,
\[ \prob\bigl\{ \Bbar(T) \bigr\} = \prob\Bigl\{ \bigcap_{t \geq T} B(t) \Bigr\} = 1-\prob\Bigl\{ \bigcup_{t \geq T} B(t)^c \Bigr\}, \]
so we are done if we can find $T > 0$ such that
\begin{equation}
\label{eq:goal}
\prob\Bigl\{ \bigcup_{t \geq T} B(t)^c \Bigr\} < \delta.
\end{equation}
Towards this, write $N(t) = N_1(t)$ and note that
\begin{align*}
\prob\Bigl\{ \bigcup_{t \geq T} B(t)^c \Bigr\} & \leq \sum_{t \geq T} \prob\{B(t)^c\} \\
& = \sum_{t \geq T} \Bigl( \sum_{n=0}^t \prob\{ B(t)^c \mid N(t) = n \} \prob\{N(t) = n\} \Bigr).
\end{align*}
It follows easily from Hoeffding's inequality that
\[ \prob\{ B(t)^c \mid N(t) = n \} = \prob\bigl\{ |\dhat_1(t) - d_1| \geq \eta \mid N(t)=n \bigr\} \leq e^{-hn}, \]
where $h = \eta^2/8 > 0$ is a constant independent of $n$.  Therefore,
\begin{align*}
\prob\Bigl\{ \bigcup_{t \geq T} B(t)^c \Bigr\} & \leq \sum_{t \geq T} \Bigl( \sum_{n=0}^t \prob\{ B(t)^c \mid N(t) = n \} \prob\{N(t) = n\} \Bigr) \\
& \leq \sum_{t \geq T} \Bigl( \sum_{n=0}^t e^{-hn} \prob\{N(t) = n\} \Bigr),
\end{align*}
and the inner-most sum is easily seen to be the moment generating function, call it $\psi_t(u)$, of the random variable $N(t)$ evaluated at $u=-h$.  To prove that this sum is finite, we must show that $\psi_t(-h)$ vanishes sufficiently fast in $t$.

Formulae for moment generating functions of standard random variables are readily available.  But $N(t)$ is not a standard random variable; it is like a Bernoulli convolution \citep{proschan.sethuraman.1976, klenke.mattner.2010} but the summands are only conditionally Bernoulli.  In Lemma~\ref{lem:mgf} below we show that $\psi_t(u)$, for $u \leq 0$, is bounded above by a certain binomial random variable's moment generating function.  

\begin{lemma}
\label{lem:mgf}
Consider a binomial random variable with parameters $(t, \omega_t)$, where $\omega_t = \pi_1(0) \theta^{\gamma(t)}$ and $\gamma(t) = \sum_{s=1}^t s^{-1}$.  If $\phi_t$ is the corresponding moment generating function, then $\psi_t(u) \leq \phi_t(u)$ for $u \leq 0$.  
\end{lemma}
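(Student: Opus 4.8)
The plan is to realize $N(t)=N_1(t)$ as a sum of conditionally Bernoulli indicators and then bound its moment generating function by peeling off one iteration at a time, each time replacing the (random) conditional success probability by a deterministic lower bound that is uniform over all iterations up to $t$. Write $N(t) = \sum_{s=1}^t \mathbf{1}\{\alpha(s)=a_1\}$ and let $\F_s$ denote the $\sigma$-field generated by the algorithm's history through iteration $s$, so that $\pi(s)$ and $N(s)$ are $\F_s$-measurable and, given $\F_{s-1}$, the indicator $\mathbf{1}\{\alpha(s)=a_1\}$ is Bernoulli with (random) success probability $\pi_1(s-1)$. Then, for $u \leq 0$, since $1-e^u \geq 0$,
\[ \E\bigl[e^{u\mathbf{1}\{\alpha(s)=a_1\}} \mid \F_{s-1}\bigr] = 1 - \pi_1(s-1)(1-e^u), \]
which is a \emph{decreasing} function of $\pi_1(s-1)$; this monotonicity in the right direction is precisely what forces the restriction $u\leq 0$.

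The crucial ingredient is a deterministic lower bound on the dominant action's sampling probability that does not depend on the realized path. From the Step~2(c) update one has $\pi_1(s) = (1-\lambda_s)\pi_1(s-1) + \lambda_s\mathbf{1}\{m(s)=1\} \geq (1-\lambda_s)\pi_1(s-1)$, and under \eqref{eq:lambda} we have $1-\lambda_s = \theta^{1/s}$. Telescoping this recursion gives
\[ \pi_1(s-1) \;\geq\; \pi_1(0)\prod_{j=1}^{s-1}(1-\lambda_j) \;=\; \pi_1(0)\,\theta^{\gamma(s-1)}, \]
and since $0<\theta<1$ and $\gamma$ is nondecreasing, $\theta^{\gamma(s-1)} \geq \theta^{\gamma(t)}$ for every $s \leq t+1$. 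Hence $\pi_1(s-1) \geq \pi_1(0)\theta^{\gamma(t)} = \omega_t$ for all $s=1,\ldots,t$ simultaneously. Combining this with the previous display (legitimate because $v \mapsto 1-v(1-e^u)$ is decreasing) yields the uniform deterministic estimate
\[ \E\bigl[e^{u\mathbf{1}\{\alpha(s)=a_1\}} \mid \F_{s-1}\bigr] \;\leq\; 1-\omega_t(1-e^u) \;=\; 1-\omega_t+\omega_t e^u, \qquad s=1,\ldots,t. \]

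A short induction on $s$ then closes the argument: because $e^{uN(s-1)}$ is $\F_{s-1}$-measurable and positive, the tower property gives
\[ \E\bigl[e^{uN(s)}\bigr] = \E\Bigl[e^{uN(s-1)}\,\E\bigl[e^{u\mathbf{1}\{\alpha(s)=a_1\}}\mid\F_{s-1}\bigr]\Bigr] \leq (1-\omega_t+\omega_t e^u)\,\E\bigl[e^{uN(s-1)}\bigr], \]
and since $N(0)=0$, iterating from $s=1$ up to $s=t$ produces $\psi_t(u)=\E[e^{uN(t)}] \leq (1-\omega_t+\omega_t e^u)^t = \phi_t(u)$, the moment generating function of the $\mathrm{Binomial}(t,\omega_t)$ law. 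I do not expect a genuine obstacle here; the only points requiring care are the filtration bookkeeping (so that $\alpha(s)$ is conditionally Bernoulli and $N(s-1)$ factors out of the conditional expectation) and keeping the monotonicity of $v \mapsto 1-v(1-e^u)$ pointed in the decreasing direction, which is why the bound is one-sided in $u$. As an aside, the same uniform lower bound $\pi_1(s-1)\geq\omega_t$ shows that $N(t)$ in fact stochastically dominates a $\mathrm{Binomial}(t,\omega_t)$ variate, so the inequality $\psi_t(u)\leq\phi_t(u)$ for $u\leq 0$ could alternatively be deduced from a coupling; but the iterated-conditioning computation above is self-contained and is all that Section~\ref{SS:main} requires.
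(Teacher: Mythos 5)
Your proof is correct and follows essentially the same route as the paper's: decompose $N(t)$ into conditionally Bernoulli indicators, bound each conditional factor $1-\pi_1(s-1)(1-e^u)$ by $1-\omega_t(1-e^u)$ using $u\leq 0$, and multiply out to get the binomial moment generating function. The only differences are cosmetic improvements: you derive the uniform lower bound $\pi_1(s-1)\geq\omega_t$ directly by telescoping the Step~2(c) update (the paper cites \citet{rs1996} for this), and you carry out the iterated conditioning by an explicit induction rather than asserting the product-of-conditional-expectations identity in one step.
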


\begin{proof}
Let $N(t) = \sum_{s=1}^t \xi(s)$, where $\xi(s) = 1$ if the optimal action is sampled at iteration $s$ and 0 otherwise.  If $\A_s$ denotes the $\sigma$-algebra generated by the history of the algorithm up to and including iteration $s$, then $\xi(s)$ satisfies 
\[ \E\{ \xi(s) \mid \A_{s-1}\} = \pi(s-1). \]
For $u \leq 0$, the moment generating function $\psi_t(u)$ of $N(t)$ satisfies 
\begin{align*}
\psi_t(u) & = \E\{ e^{uN(t)} \} = \E\{ e^{u\xi(1) + \cdots + u\xi(t)} \} \\
& = \E\Bigl\{ \prod_{s=1}^t \E( e^{u\xi(s)} \mid \A_{s-1} ) \Bigr\} = \E\Bigl\{ \prod_{s=1}^t \bigl[ 1 - (1-e^u) \pi(s-1) \bigr] \Bigr\}. 
\end{align*}
But \citet{rs1996} show that $\omega_t \leq \min\{\pi(1),\ldots,\pi(t)\}$, so 
\[ \psi_t(u) \leq \prod_{s=1}^t \bigl[ 1-(1-e^u) \omega_t \bigr] = \bigl[1-(1-e^u)\omega_t \bigr]^t. \]
But the right-hand side above is exactly $\phi_t(u)$, completing the proof.  
\end{proof}

Back to our main discussion, we now have that
\[ \prob\Bigl\{ \bigcup_{t \geq T} B(t)^c \Bigr\} \leq \sum_{t \geq T} \psi_t(-h) \leq \sum_{t \geq T} \phi_t(-h), \]
and it is well-known that the moment generating function $\phi_t$ for the binomial random variable satisfies
\[ \phi_t(-h) = \bigl[ 1 - \omega_t (1-e^{-h}) \bigr]^t = \bigl[ 1 - \pi_1(0)(1-e^{-h}) \theta^{\gamma(t)} \bigr]^t. \]
But the sequence $\gamma(t)$ grows like $\ln(t)$, and $\theta^{\ln(t)} = t^{\ln(\theta)}$, so for large $t$
\[ \phi_t(-h) \sim \Bigl[ 1 - \frac{\pi_1(0)(1-e^{-h})}{t^{-\ln(\theta)}} \Bigr]^t. \]
The right-hand side above is just the sequence $\zeta(t)$ defined in Lemma~\ref{lem:sum}, with 
\[ a = \pi_1(0) (1-e^{-h}) \quad \text{and} \quad b = -\ln(\theta). \]
Therefore, the series $\sum_{t \geq 1} \phi_t(-h)$ converges so, for any $\delta$, there exists $T$ such that $\sum_{t \geq T} \phi_t(-h) < \delta$, thus proving \eqref{eq:goal}.  To put everything together, let $T_4^\star$ be the smallest $T$ with $\sum_{t \geq T} \phi_t(-h) < \delta$.  Then Theorem~\ref{thm:eps.opt} follows by taking $T^\star = T_3^\star + T_4^\star$.  

A natural question is if one can give a deterministic bound for $T^\star$ in terms of the user-specfied $\eps$, $\delta$, and $\theta$.  An affirmative answer to this question is given in Appendix~\ref{S:iterations}.  We choose not to give much emphasis to this result, as the bound we obtain appears to be quite conservative.  For example, for numerical experiments run under the setup in Simulation~1 of \citet{thathachar.sastry.1985}, we find that more than 95\% of sample paths converge in roughly 25--250 iterations, while our conservative theoretical bounds are, for moderate $\theta$, orders of magnitude greater.  

\section{Discussion}
\label{S:discuss}

In this paper we have taken a closer look at convergence properties of pursuit learning.  In particular, we have identified a gap in existing proofs of $\eps$-optimality and provided a new argument to fill this gap.  An important consequence of our theoretical analysis is that it seems necessary to explicitly specify the rate at which the tuning parameter sequence $\lambda=\lambda_t$ vanishes with $t$.  In fact, if $\omega_t$ defined in Lemma~\ref{lem:mgf} vanishes too quickly, which it would if $\lambda_t \equiv \lambda$, then $\sum_t \phi_t(-h) = \infty$ and the proof fails.  But we should also reiterate that a theoretical analysis that requires vanishing $\lambda_t$ need not conflict with the tradition of running the algorithm with fixed small $\lambda$ in practical applications.  In fact, the particular $\lambda_t$ vanishes relatively fast so, for applications, we recommend running pursuit learning with, say, 
\[ \lambda_t = 1 - \theta^{[1 + (t-t_0)^+]^{-1}}, \]
where $t_0$ is some fixed cutoff, and $x^+ = \max\{0,x\}$.  This effectively keeps $\lambda_t$ constant for a fixed finite period of time, after which it vanishes like that in \eqref{eq:lambda}.  Alternatively, one might consider $\lambda_t = 1-\theta^{v(t)}$, where $v(t)$ vanishes more slowly than $t^{-1}$.  This choice of $\lambda_t$ vanishes more slowly than that in \eqref{eq:lambda}, thus giving the algorithm more opportunities to adjust to the environment.  We believe that an analysis similar to ours can be used to show that the corresponding pursuit learning converges in the sense of Definition~\ref{def:eps.opt}. 

It is also worth mentioning that the results of \citet{tmm2010}, for $\lambda_t$ as in \eqref{eq:lambda}, can be applied to show that $\pi_1(t) \to 1$ with probability~1 as $(t,\lambda_t) \to (\infty,0)$.  This, of course, immediately implies $\eps$-optimality in the sense of Definition~\ref{def:eps.opt}.  However, this indirect argument does not give any insight as to how to bound the number of iterations needed to be sufficiently close to convergence, as we do---albeit conservatively---in Appendix~\ref{S:iterations}.  But the result proved in \citet{tmm2010} that $\dhat_1(t)$ is largest among the $\dhat(t)$'s infinitely often with probability~1, together with the formula in \eqref{eq:pi.monotone} in Appendix~\ref{S:iterations} can perhaps be used to reason towards and almost sure rate of convergence for pursuit learning.

\section*{Acknowledgements}

A portion of this work was completed while the first author was affiliated with the Department of Mathematical Sciences at Indiana University--Purdue University Indianapolis.  The authors thank Professors Snehasis Mukhopadhyay and Chuanhai Liu for sharing their insight, and the Editor and two anonymous referees for some helpful suggestions.

\appendix

\section{Proof of Lemma \ref{lem:sum}}
\label{S:proof}

To start, write $\zeta(t)$ as
\[ \zeta(t) = \Bigl(1-\frac{a}{t^b}\Bigr)^t = \Bigl[ \Bigl(1-\frac{a}{t^b}\Bigr)^{t^b} \Bigr]^{t^{1-b}}. \]
If $f(t) = (1-\frac{a}{t})^t$, then ordinary calculus reveals that
\[ \frac{d}{dt} \ln f(t) = \ln\Bigl(1-\frac{a}{t}\Bigr) + \frac{a}{t-a} = -\ln\Bigl(1+\frac{a}{t-a}\Bigr) + \frac{a}{t-a} > 0. \]
Therefore, we have shown that $\ln f(t)$ and, hence, $f(t)$ and, hence, $f(t^b)$ are monotone increasing.  Moreover, $f(t^b) \uparrow e^{-a} < 1$.  Thus, $\zeta(t) \leq \exp\{-at^{1-b}\}$.  So to show that $\sum_{t=1}^\infty \zeta(t)$ is finite, it suffices to show that, for $c=1-b$,
\[ \int_1^\infty e^{-at^c} \,dt < \infty. \]
Making a change-of-variable $x=t^c$, the integral becomes
\[ \int_1^\infty e^{-at^c} \,dt = \int_1^\infty \frac{1}{cx^{1-1/c}} e^{-ax} \,dx = \frac1c \int_1^\infty x^{1/c-1} e^{-ax} \,dx. \]
Since $1/c > 1$ and $a > 0$, the integral is finite, completing the proof.

Making one more change-of-variables ($y=ax$), one finds that the last integral above can be expressed as 
\[ \int_1^\infty e^{-at^c}\,dt = \frac{1}{ca^{1/c}} \int_a^\infty y^{1/c-1} e^{-y} \,dy = \frac{1}{ca^{1/c}} \, \Gamma(c^{-1}; a), \]
where $\Gamma(s,x) = \int_x^\infty u^{s-1} e^{-u} \,du$ is the incomplete gamma function.

\section{A bound on the number of iterations}
\label{S:iterations}

As a follow-up to the proof of Theorem~\ref{thm:eps.opt}, we give a conservative upper bound on the number of iterations $T^\star$ needed to be sufficiently close to convergence.  

\begin{theorem}
\label{thm:iterations}
For given $\eps, \delta \in (0,1)$ and $\theta \in (e^{-1},1)$, a deterministic bound on the necessary number of iterations $T^\star$ in Theorem~\ref{thm:eps.opt} can be found numerically.  
\end{theorem}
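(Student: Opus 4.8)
The plan is to retrace the proof of Theorem~\ref{thm:eps.opt} and replace each existential constant by an explicit quantity, so that $T^\star = T_3^\star + T_4^\star$ becomes a computable function of $\eps$, $\delta$, $\theta$ (and of the fixed reward probabilities, which enter only through $\eta$). Fix $\eta > 0$ as in that proof --- concretely, $\eta = \tfrac12 \min_{i \neq 1}(d_1 - d_i)$, or any available lower bound for it, will do --- and abbreviate $h = \eta^2/8$, $a = \pi_1(0)(1 - e^{-h})$, $b = -\ln\theta$, and $c = 1 - b = 1 + \ln\theta$; here $c \in (0,1)$ precisely because $\theta \in (e^{-1},1)$, and $a, b \in (0,1)$ as well.

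First I would make the monotone phase explicit. On the event $\Bbar(T)$ the argmax $m(s)$ equals $1$ for every $s \geq T$, so the Step~2(c) update gives $1 - \pi_1(s) = (1 - \lambda_s)\bigl(1 - \pi_1(s-1)\bigr) = \theta^{1/s}\bigl(1 - \pi_1(s-1)\bigr)$ for $s > T$, and unwinding the recursion produces
\begin{equation}
\label{eq:pi.monotone}
\pi_1(t+T) = 1 - \theta^{\gamma(t+T) - \gamma(T)}\bigl(1 - \pi_1(T)\bigr), \qquad t \geq 0,
\end{equation}
valid on $\Bbar(T)$, where $\gamma(t) = \sum_{s=1}^t s^{-1}$. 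In particular $\pi_1$ is nondecreasing on $\Bbar(T)$ after time $T$, so once it exceeds $1 - \eps$ it stays there; and, bounding $1 - \pi_1(T) \leq 1$ and $\gamma(t+T) - \gamma(T) \geq \ln\bigl(1 + \tfrac{t}{T+1}\bigr)$, one checks that the right-hand side of \eqref{eq:pi.monotone} falls below $\eps$ as soon as $t > (T+1)\bigl(\eps^{1/\ln\theta} - 1\bigr)$. Since $\eps^{1/\ln\theta} > 1$ whenever $\eps, \theta \in (0,1)$, this yields an explicit $T_3^\star = T_3^\star(\eps, \theta, T)$ once the cutoff $T$ has been chosen.

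Next I would bound the pre-convergence sum $\sum_{t \geq T}\phi_t(-h)$. From $\gamma(t) \leq 1 + \ln t$ we get $\theta^{\gamma(t)} \geq \theta\, t^{\ln\theta} = \theta\, t^{-b}$, hence $\phi_t(-h) = [1 - a\theta^{\gamma(t)}]^t \leq [1 - a\theta t^{-b}]^t$, and the monotonicity argument from the proof of Lemma~\ref{lem:sum}---legitimate since $a\theta \in (0,1)$---applied with the coefficient $a\theta$ in place of $a$ gives $\phi_t(-h) \leq \exp\{-a\theta\, t^{c}\}$. As $x \mapsto \exp\{-a\theta x^{c}\}$ is decreasing, $\sum_{t \geq T}\phi_t(-h) \leq \int_{T-1}^\infty e^{-a\theta x^{c}}\,dx$, and the change of variables of Appendix~\ref{S:proof} evaluates the right-hand side as $\frac{1}{c(a\theta)^{1/c}}\,\Gamma\bigl(c^{-1};\, a\theta(T-1)^{c}\bigr)$, with $\Gamma$ the incomplete gamma function of Appendix~\ref{S:proof}. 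This expression is strictly decreasing in $T$ and tends to $0$, so I would set $T_4^\star$ equal to the least integer $T$ at which it drops below $\delta$; such a $T$ is located by a one-dimensional search (bisection suffices) using any standard incomplete-gamma routine. Taking $T = T_4^\star$ in the previous paragraph then fixes $T_3^\star$, and $T^\star = T_3^\star + T_4^\star$ is the claimed numerical bound: for $t > T^\star$, combining the law of total probability with $\prob\{\Bbar(T_4^\star)\} > 1 - \delta$ and the monotonicity noted above reproduces \eqref{eq:eps.opt}, exactly as in the proof of Theorem~\ref{thm:eps.opt}.

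The argument introduces no new ideas; all the work lies in the bookkeeping of constants, and the one point that must be watched is the direction of the inequalities --- an \emph{upper} bound on $\phi_t(-h)$ needs a \emph{lower} bound on $\theta^{\gamma(t)}$ and hence an \emph{upper} bound on $\gamma(t)$, whereas the monotone-phase estimate needs a \emph{lower} bound on $\gamma(t+T) - \gamma(T)$, so it is easy to flip one of them by accident. It is also worth stressing that $T_3^\star$ and $T_4^\star$ are not independent ($T_4^\star$ must be settled first from $\delta$, and then enters $T_3^\star$), and that the resulting $T^\star$ is far from sharp: a tighter bound would require a much more delicate analysis of the joint law of $\bigl(N_1(t), \dhat_1(t)\bigr)$ than the Hoeffding-plus-moment-generating-function route used here, which is why we are content with the numerical bound above.
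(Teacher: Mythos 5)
Your proof is correct and follows essentially the same route as the paper's: the same split $T^\star = T_3^\star + T_4^\star$, an incomplete-gamma bound on $\sum_{t \geq T} \phi_t(-h)$ obtained from the integral test, and the unwound monotone recursion for $\pi_1$ to extract $T_3^\star$. If anything you are slightly more explicit than the paper --- you replace its asymptotic $\phi_t(-h) \sim [1 - a t^{\ln\theta}]^t$ with the genuine inequality $\theta^{\gamma(t)} \geq \theta\, t^{\ln\theta}$, and your exponent $\gamma(t+T) - \gamma(T)$ in the monotone-phase formula is the conservative one that actually follows from starting the recursion at time $T$.
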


In the proof that follows, we are assuming $h$ to be a known constant, while it actually depends on the $\eta$ used above which, in turn, depends on the unknown $d$'s.  The bounds obtained in \citet{rs1996} also depend on $\eta$, called the size of the problem.  To use this bound in practice, users must estimate $\eta$ by some other means.    

\begin{proof}[Proof of Theorem~\ref{thm:iterations}]
As stated above, the desired $T^\star$ is actually a sum $T_3^\star + T_4^\star$.  Let's begin with $T_4^\star$, the smallest $T=T(\delta)$ such that $\sum_{t \geq T} \phi_t(-h) < \delta$.  From the proof of the classical integral test for convergence of infinite series in calculus, it follows that 
\[ \sum_{t \geq T} \phi_t(-h) \leq \phi_T(-h) + \int_T^\infty \phi_t(-h) \,dt. \]
A modification of the argument presented in Appendix~\ref{S:proof} shows that 
\[ \int_T^\infty \phi_t(-h) \,dt \leq \frac{b}{a^b} \, \Gamma(b; aT), \]
where $a=\pi_1(0)(1-e^{-h})$, $b=(1+\ln\theta)^{-1}$, and $\Gamma(s,x)$ is the incomplete gamma function (defined in Appendix~\ref{S:proof}).  Since $\phi_T(-h)$ and $\Gamma(b; aT)$ are both decreasing functions of $T$, it is possible to solve the equation 
\[ \phi_T(-h) + \frac{b}{a^b} \, \Gamma(b; aT) = \delta \]
for $T$ numerically to obtain the bound $T_4^\star$ in terms of the user-specified inputs.  

Towards bounding $T_3^\star$ we note that $\dhat_1(t+T_4^\star)$ is the largest of the $\dhat$'s for all $t \geq 1$ for sample paths in a set $\Omega$ of probability $> 1-\delta$.  For sample paths in this $\Omega$, \citet{tmm2010} prove that 
\begin{equation}
\label{eq:pi.monotone}
\pi_1(t+T_4^\star) = 1 - \theta^{\gamma(t+T_4^\star)}\{1 - \pi_1(T_4^\star)\}, \quad t \geq 1, 
\end{equation}
where $\gamma(t) = \sum_{s=1}^t s^{-1}$.  Since $\pi_1(T_4^\star) \in (0,1)$, it easily follows that 
\[ 1-\pi_1(t+T_4^\star) \leq \theta^{\gamma(t+T_4^\star)}, \quad t \geq 1. \]
Given $\eps$ and $T_4^\star$, it is easy to calculate $T_3^\star$ such that $\theta^{\gamma(t+T_4^\star)} \leq \eps$ for all $t > T_3^\star$.   
\end{proof}

\bibliographystyle{/Users/rgmartin/Research/TexStuff/asa}
\bibliography{/Users/rgmartin/Research/mybib}

\begin{thebibliography}{24}
\newcommand{\enquote}[1]{``#1''}
\expandafter\ifx\csname natexlab\endcsname\relax\def\natexlab#1{#1}\fi

\bibitem[{Agache and Oommen(2002)}]{agache.oommen.2002}
Agache, M. and Oommen, B.~J. (2002), \enquote{Generalized pursuit learning
  schemes: New families of condinuous and discrete learning automata,}
  \textit{IEEE Trans. Systems Man Cybernet.}, 32, 738--749.

\bibitem[{Atlasis et~al.(2000)Atlasis, Loukas, and
  Vasilakos}]{atlasis.loukas.vasilakos.2000}
Atlasis, A., Loukas, A., and Vasilakos, A.~V. (2000), \enquote{Call admission
  control algorithm in ATM networks: A methodology,} \textit{Computer
  Networks}, 34, 341--353.

\bibitem[{Hoeffding(1963)}]{hoeffding1963}
Hoeffding, W. (1963), \enquote{Probability inequalities for sums of bounded
  random variables,} \textit{J. Amer. Statist. Assoc.}, 58, 13--30.

\bibitem[{Kashki et~al.(2010)Kashki, Abido, and
  Abdel-Magid}]{kashki.abido.abdel.2010}
Kashki, M., Abido, M., and Abdel-Magid, Y. (2010), \enquote{Pole placement
  approach for robust optimum design of pss and tcsc-based stabilizers using
  reinforcement learning automata,} \textit{Electrical Engineering}, 383--394.

\bibitem[{Klenke and Mattner(2010)}]{klenke.mattner.2010}
Klenke, A. and Mattner, L. (2010), \enquote{Stochastic ordering of classical
  discrete distributions,} \textit{Adv. Appl. Probab.}, 42, 392--410.

\bibitem[{Kushner and Yin(2003)}]{kushner}
Kushner, H.~J. and Yin, G.~G. (2003), \textit{Stochastic approximation and
  recursive algorithms and applications}, New York: Springer-Verlag, 2nd ed.

\bibitem[{Lanct{\^o}t and Oommen(1992)}]{lanctotoommen1992}
Lanct{\^o}t, J.~K. and Oommen, B.~J. (1992), \enquote{Discretized estimator
  learning automata,} \textit{IEEE Trans. Systems Man Cybernet.}, 22,
  1473--1483.

\bibitem[{Lixia et~al.(2010)Lixia, Gang, Ming, and
  Yuxing}]{lixia.gang.ming.yuxing.2010}
Lixia, L., Gang, H., Ming, X., and Yuxing, P. (2010), \enquote{Learning
  automata based spectrum allocation in cognitive networks,} in \textit{IEEE
  International Conference on Wireless Communications, Networking, and
  Information Security}, pp. 503--508.

\bibitem[{Misra et~al.(2009)Misra, Tiwari, and
  Obaidat}]{misra.tiwari.obaidat.2009}
Misra, S., Tiwari, V., and Obaidat, M. (2009), \enquote{Lacas: learning
  automata-based congestion avoidance scheme for healthcare wireless sensor
  networks,} \textit{IEEE Journal on Selected Areas in Communications}, 27,
  466--479.

\bibitem[{Narendra and Thathachar(1989)}]{narendra.thathachar.1989}
Narendra, K.~S. and Thathachar, M. A.~L. (1989), \textit{Learning automata: An
  introduction}, Englewood Cliffs, NJ: Prentice Hall.

\bibitem[{Oommen and Hashem(2010)}]{oommen.hashem.2010}
Oommen, B.~J. and Hashem, M.~K. (2010), \enquote{Modeling a student's behavior
  in a tutorial-like system using learning automata,} \textit{IEEE
  Trans.~Systems Man Cybernet~B}, 40, 481--492.

\bibitem[{Oommen and Lanct{\^o}t(1990)}]{oommenlanctot1990}
Oommen, B.~J. and Lanct{\^o}t, J.~K. (1990), \enquote{Discretized pursuit
  learning automata,} \textit{IEEE Trans. Systems Man Cybernet.}, 20, 931--938.

\bibitem[{Papadimitriou et~al.(2004)Papadimitriou, Sklira, and
  Pomportsis}]{papa2004}
Papadimitriou, G.~I., Sklira, M., and Pomportsis, A.~S. (2004), \enquote{A new
  class of $\epsilon$-optimal learning automata,} \textit{IEEE Trans. Systems
  Man Cybernet B}, 34, 246--254.

\bibitem[{Proschan and Sethuraman(1976)}]{proschan.sethuraman.1976}
Proschan, F. and Sethuraman, J. (1976), \enquote{Stochastic comparisons of
  order statistics from heterogeneous populations, with applications in
  reliability,} \textit{J. Multivariate Anal.}, 6, 608--616.

\bibitem[{Rajaraman and Sastry(1996)}]{rs1996}
Rajaraman, K. and Sastry, P.~S. (1996), \enquote{Finite time analysis of the
  pursuit algorithm for learning automata,} \textit{IEEE Trans. Systems Man
  Cybernet B}, 26, 590--598.

\bibitem[{Robbins and Monro(1951)}]{robbinsmonro}
Robbins, H. and Monro, S. (1951), \enquote{A stochastic approximation method,}
  \textit{Ann. Math. Statistics}, 22, 400--407.

\bibitem[{Sastry(1985)}]{sastry1985}
Sastry, P.~S. (1985), \enquote{Systems of Learning Automata: Estimator
  Algorithms and Applications,} Ph.D. thesis, Indian Institute of Science.

\bibitem[{Thathachar and Sastry(1985)}]{thathachar.sastry.1985}
Thathachar, M. A.~L. and Sastry, P.~S. (1985), \enquote{A new approach to the
  design of reinforcement schemes for learning automata,} \textit{IEEE Trans.
  Systems Man Cybernet.}, 15, 168--175.

\bibitem[{Thathachar and Sastry(1987)}]{thatacharsastry1987}
--- (1987), \enquote{Learning optimal discriminant functions through a
  cooperative game of automata,} \textit{IEEE Trans. Systems Man Cybernet.},
  17, 73--85.

\bibitem[{Tilak et~al.(2011)Tilak, Martin, and Mukhopadhyay}]{tmm2010}
Tilak, O., Martin, R., and Mukhopadhyay, S. (2011), \enquote{A decentralized,
  indirect method for learning automata games,} \textit{IEEE Trans.~Systems Man
  Cybernet B}, 41, 1213--1223.

\bibitem[{Torkestania and
  Meybodi(2010{\natexlab{a}})}]{torkestania.meybodi.2010a}
Torkestania, J. and Meybodi, M. (2010{\natexlab{a}}), \enquote{Clustering the
  wireless ad hoc networks: A distributed learning automata approach,}
  \textit{Journal of Parallel and Distributed Computing}, 70, 394--405.

\bibitem[{Torkestania and
  Meybodi(2010{\natexlab{b}})}]{torkestania.meybodi.2010b}
--- (2010{\natexlab{b}}), \enquote{An intelligent backbone formation algorithm
  for wireless ad hoc networks based on distributed learning automata,}
  \textit{Computer Networks}, 54, 826--843.

\bibitem[{Tuan et~al.(2010)Tuan, Tong, and
  Premkumar}]{tuan.tong.premkumar.2010}
Tuan, T., Tong, L., and Premkumar, A. (2010), \enquote{An adaptive learning
  automata algorithm for channel selection in cognitive radio network,} in
  \textit{2010 International Conference on Communications and Mobile
  Computing}, vol.~2, pp. 159--163.

\bibitem[{Zhong et~al.(2010)Zhong, Xu, and Tao}]{zhong.xu.tao.2010}
Zhong, W., Xu, Y., and Tao, M. (2010), \enquote{Precoding strategy selection
  for cognitive mimo multiple access channels using learning automata,} in
  \textit{2010 IEEE International Conference on Communications}, pp. 23--27.

\end{thebibliography}

\end{document}